\documentclass{article}

\usepackage[preprint]{neurips_2026}

\usepackage[utf8]{inputenc} % allow utf-8 input
\usepackage[T1]{fontenc}    % use 8-bit T1 fonts

\usepackage{graphicx}

\usepackage{amsmath, amsfonts}
\usepackage{amsthm, thmtools}

\usepackage{booktabs}       % professional-quality tables
\usepackage{nicefrac}       % compact symbols for 1/2, etc.
\usepackage{microtype}      % microtypography
\usepackage[dvipsnames]{xcolor}         % colors
\usepackage{enumitem}
\usepackage{multirow}

\usepackage[
  colorlinks=true,
  linkcolor=Blue,
  citecolor=Blue,
  urlcolor=Blue
]{hyperref}                 % hyperlinks
\usepackage{url}            % simple URL typesetting
\usepackage{cleveref}       % clever reference

\theoremstyle{plain}
\newtheorem{theorem}{Theorem}
\newtheorem{lemma}[theorem]{Lemma}
\newtheorem{corollary}[theorem]{Corollary}

\theoremstyle{definition}
\newtheorem{definition}{Definition}
\newtheorem{assumption}{Assumption}
\newtheorem{observation}{Observation}

\theoremstyle{remark}
\newtheorem{remark}{Remark}

\crefname{theorem}{theorem}{theorems}
\Crefname{theorem}{Theorem}{Theorems}
\crefname{proposition}{proposition}{propositions}
\Crefname{proposition}{Proposition}{Propositions}
\crefname{lemma}{lemma}{lemmas}
\Crefname{lemma}{Lemma}{Lemmas}
\crefname{corollary}{corollary}{corollaries}
\Crefname{corollary}{Corollary}{Corollaries}
\crefname{definition}{definition}{definitions}
\Crefname{definition}{Definition}{Definitions}
\crefname{remark}{remark}{remarks}
\Crefname{remark}{Remark}{Remarks}
\crefname{assumption}{assumption}{assumptions}
\Crefname{assumption}{Assumption}{Assumptions}
\crefname{observation}{observation}{observations}
\Crefname{observation}{Observation}{Observations}

\title{On the Sparsity of Direct Preference Optimization: Weight Disentanglement in the NTK Regime}

\author{%
  Kensuke Sasaki\\
  Department of Computer Science\\
  The University of Tokyo\\
  \texttt{sasaki-kensuke0222@g.ecc.u-tokyo.ac.jp} \\
  \And
  Issei Sato\\
  Department of Computer Science\\
  The University of Tokyo\\
  \texttt{sato@g.ecc.u-tokyo.ac.jp} \\
}

\begin{document}

\maketitle

\begin{abstract}
  Direct Preference Optimization (DPO) has been widely adopted for aligning language models with human preferences. Recent empirical work has revealed that DPO induces remarkably sparse parameter updates, yet no theoretical explanation for this phenomenon has been offered. We analyze DPO's optimization dynamics under a neural tangent kernel linearization and show that the logistic structure of the DPO loss gives rise to an implicit maximum-margin bias. Building on this observation, we introduce the concept of \emph{dual sparsity}: the DPO update direction is a sparse linear combination (data-level sparsity, driven by support vectors) of sparse gradient feature vectors (parameter-level sparsity, arising from gradient cancellation between preferred and dispreferred responses). We formalize this structure in our main theorem, bound the number of support pairs by a margin-dependent spectral quantity, and present a framework relating dual sparsity to weight disentanglement---a condition favorable for model merging. Experiments on GPT-2 Small and Medium and on 7--9B instruction-tuned models (Llama-3-8B, Mistral-7B, Gemma-2-9B) with the Anthropic HH-RLHF and UltraFeedback datasets confirm both sources of sparsity, and experiments on GPT-2 show that DPO reduces cross-task interference by up to $35{\times}$ compared to supervised fine-tuning.
\end{abstract}

\section{Introduction}
\label{sec:introduction}

Direct Preference Optimization~\citep{rafailov2023direct} has emerged as a dominant paradigm for aligning large language models with human preferences.
By casting the alignment objective as a logistic loss over preference pairs, DPO circumvents the instabilities of reinforcement learning from human feedback while retaining strong empirical performance.
As practitioners increasingly train separate DPO models for distinct objectives---helpfulness, harmlessness, instruction following---there is growing interest in composing these independently trained models via \emph{model merging}~\citep{ilharco2023editing,yu2024language}, which combines task-specific parameter updates (task vectors) into a single multitask model without additional training.

A striking but poorly understood empirical observation is that DPO fine-tuning induces highly sparse parameter updates.
\citet{mukherjee2025sparsity} recently demonstrated that reinforcement learning-based alignment concentrates its modifications onto a small subnetwork of the pre-trained model, leaving the vast majority of parameters essentially unchanged.
This sparsity stands in marked contrast to supervised fine-tuning (SFT), which typically distributes updates broadly across the parameter space.
Independently, \citet{ortizjimenez2023task} identified \emph{weight disentanglement}---the condition that task vectors act on non-overlapping regions of parameter space---as a key property enabling successful model merging.
A natural question arises: is the sparsity of DPO a mere coincidence, or does it reflect a deeper structure in the optimization that is systematically favorable for composing task vectors?

In this paper, we provide a theoretical answer to this question. We analyze the implicit bias of DPO under a neural tangent kernel (NTK) linearization~\citep{jacot2018neural} and show that the logistic loss structure, combined with the geometry of preference-pair gradients, gives rise to a phenomenon we call \emph{dual sparsity}. The DPO update direction converges to the maximum-margin solution in the NTK feature space, inheriting two complementary forms of sparsity: \textbf{data-level sparsity}, where only a small subset of preference pairs (support vectors) contribute to the final update, and \textbf{parameter-level sparsity}, where the gradient feature vector for each pair is itself sparse due to gradient cancellation between preferred and dispreferred responses that share common tokens. Building on this, we present a framework relating dual sparsity to weight disentanglement, and verify empirically that DPO produces cross-task inner products an order of magnitude smaller than those under SFT.

Our main contributions are as follows:
\begin{enumerate}[label=(\textbf{C\arabic*})]
  \item We reduce DPO to maximum-margin classification under NTK linearization and introduce \emph{dual sparsity}---a joint data-level and parameter-level sparsity structure---formalized in \cref{thm:dual_sparsity}. This provides the first theoretical explanation for why DPO induces sparse parameter updates without any explicit sparsity-promoting regularization.

  \item We present a framework relating dual sparsity to weight disentanglement~\citep{ortizjimenez2023task} via a triplet-space formulation of the DPO margin (\cref{sec:wd_connection}). The framework reduces the disentanglement condition to a per-triplet inner product whose magnitude is governed by the parameter-space overlap between tasks---an empirical question that we evaluate in \cref{sec:experiments}.

  \item We validate both sources of sparsity on GPT-2 Small (124M) and Medium (355M) fine-tuned with the Anthropic HH-RLHF~\citep{bai2022training} and UltraFeedback~\citep{cui2024ultrafeedback} datasets, and at modern scale on Llama-3-8B-Instruct, Mistral-7B-Instruct-v0.2, and Gemma-2-9B-it. DPO feature vectors are over 90\% near-zero from 124M to 9B parameters, and the time-integrated weights concentrate on a small subset of pairs within three epochs of full fine-tuning at the 7--8B scale. On GPT-2, cross-task interference under DPO is reduced by $11{\times}$ to $35{\times}$ relative to SFT across all configurations.
\end{enumerate}

The remainder of the paper is organized as follows. \Cref{sec:preliminaries} reviews the DPO formulation and weight disentanglement framework. \Cref{sec:theoretical_analysis} presents our theoretical analysis, culminating in the dual sparsity theorem and a framework connecting it to weight disentanglement. \Cref{sec:experiments} provides experimental validation. \Cref{sec:related_work} discusses related work, and \cref{sec:conclusion} concludes with limitations and future directions.

%% ============================================================

\section{Preliminaries}
\label{sec:preliminaries}

\subsection{Direct Preference Optimization}
\label{sec:dpo}

Let $\pi_{\mathrm{ref}}$ denote a reference policy, typically a pre-trained or supervised fine-tuned (SFT) model~\citep{rafailov2023direct}. Given a prompt $x$ and a pair of responses $(y^+, y^-)$ where $y^+$ is preferred and $y^-$ is dispreferred, the log-probability margin under parameters $\theta$ is
\begin{equation}
  m_i(\theta) = \beta \left( \log \frac{\pi_\theta(y_i^+ \mid x_i)}{\pi_\theta(y_i^- \mid x_i)} - \log \frac{\pi_{\mathrm{ref}}(y_i^+ \mid x_i)}{\pi_{\mathrm{ref}}(y_i^- \mid x_i)} \right),
  \label{eq:margin}
\end{equation}
with temperature $\beta > 0$.
For a preference dataset $\mathcal{D} = \{(x_i, y_i^+, y_i^-)\}_{i=1}^n$, the DPO loss is the logistic loss
\begin{equation}
  \ell_i(\theta) = \log\!\left(1 + e^{-m_i(\theta)}\right),
  \label{eq:dpo_loss}
\end{equation}
whose gradient w.r.t.\ $\theta$ is weighted by $\sigma(-m_i(\theta)) \in (0,1)$: the gradient vanishes for ``easy'' pairs already preferred by the policy, focusing optimization on the ``hard'' subset near the decision boundary.

\subsection{Weight Disentanglement}
\label{sec:weight_disentanglement}

We review the notion of weight disentanglement~\citep{ortizjimenez2023task}, which characterizes an ideal condition for composing independently trained task vectors without cross-task interference.

Consider a neural network $f\colon \mathcal{X} \times \Theta \to \mathcal{Y}$ with pre-trained parameters $\theta_0$. A task $t$ is specified by a data support $\mathcal{D}_t \subseteq \mathcal{X}$, an input distribution $\mu_t$ with $\mathrm{supp}(\mu_t) = \mathcal{D}_t$, and a target function $f_t^*\colon \mathcal{D}_t \to \mathcal{Y}$. Fine-tuning $f$ on task $t$ yields parameters $\theta_t^*$, and the corresponding task vector is defined as $\tau_t := \theta_t^* - \theta_0$.

\begin{definition}[Task Arithmetic, \citealp{ilharco2023editing}]
\label{def:task_arithmetic}
Let $\mathcal{T} = \{\tau_t\}_{t \in [T]}$ be a set of task vectors with mutually disjoint supports $\mathcal{D}_t \cap \mathcal{D}_{t'} = \emptyset$ for $t \neq t'$. The network $f$ satisfies the \emph{task arithmetic property} around $\theta_0$ if
\begin{equation}
  f\!\left(x;\, \theta_0 + \sum_{s=1}^T \alpha_s \tau_s\right) = \begin{cases} f(x;\, \theta_0 + \alpha_t \tau_t) & x \in \mathcal{D}_t, \\ f(x;\, \theta_0) & x \notin \bigcup_{t} \mathcal{D}_t. \end{cases}
  \label{eq:task_arithmetic}
\end{equation}
\end{definition}

\begin{definition}[Weight Disentanglement, \citealp{ortizjimenez2023task}]
\label{def:weight_disentanglement}
The network $f$ is \emph{weight-disentangled} with respect to $\mathcal{T}$ and $\{\mathcal{D}_t\}$ if there exist functions $g_t\colon \mathcal{X} \times \Theta \to \mathcal{Y}$ with $g_t(x; \cdot) = 0$ for $x \notin \mathcal{D}_t$ and a residual $g_0$ with $g_0(x) = 0$ for $x \in \bigcup_t \mathcal{D}_t$, such that
\begin{equation}
  f\!\left(x;\, \theta_0 + \sum_{t=1}^T \alpha_t \tau_t\right) = \sum_{t=1}^T g_t(x;\, \alpha_t \tau_t) + g_0(x).
  \label{eq:weight_disentanglement}
\end{equation}
\end{definition}

It is straightforward to verify that weight disentanglement (\cref{def:weight_disentanglement}) implies the task arithmetic property (\cref{def:task_arithmetic}) by setting $g_t(x; \alpha_t \tau_t) = f(x; \theta_0 + \alpha_t \tau_t)\mathbf{1}(x \in \mathcal{D}_t)$ and $g_0(x) = f(x; \theta_0)\mathbf{1}(x \notin \bigcup_t \mathcal{D}_t)$.

%% ============================================================
\section{Theoretical Analysis}
\label{sec:theoretical_analysis}

Recent empirical work has shown that alignment fine-tuning via DPO induces highly sparse parameter updates relative to the pre-trained weights~\citep{mukherjee2025sparsity}. Independently, weight disentanglement has been identified as a desirable property for robust model merging~\citep{ortizjimenez2023task}. The following questions remain open:
\begin{enumerate}
  \item \textbf{Why does DPO induce sparsity?} What mechanism in the DPO optimization dynamics leads to sparse parameter updates, despite the absence of any explicit sparsity-promoting regularization?
  \item \textbf{What are the implications of this sparsity for cross-task interference?} Specifically, does the sparsity inherent in DPO updates reduce the interference between independently trained task vectors?
\end{enumerate}
This section addresses both questions. We analyze the implicit bias of DPO under a neural tangent kernel (NTK) linearization (\cref{sec:max_margin}) and identify a \emph{dual sparsity} structure that emerges from the interplay between the logistic loss and the geometry of the gradient features (\cref{sec:dual_sparsity}). We then present a framework relating this structure to weight disentanglement, with empirical evaluation deferred to \cref{sec:experiments} (\cref{sec:wd_connection}).

\subsection{DPO as Maximum-Margin Classification in Tangent Space}
\label{sec:max_margin}

We begin by linearizing the DPO margin around the initial parameters $\theta_0$. Under the neural tangent kernel (NTK) regime~\citep{jacot2018neural}, the margin admits a first-order approximation:
\begin{equation}
  m_i(\theta) \approx m_i(\theta_0) + \langle \Phi_i,\, \Delta\theta \rangle,
  \label{eq:margin_linearized}
\end{equation}
where $\Delta\theta = \theta - \theta_0$ and the \emph{gradient feature vector} $\Phi_i$ is defined as
\begin{equation}
  \Phi_i := \beta \left( \nabla_\theta \log \pi_{\theta_0}(y_i^+ \mid x_i) - \nabla_\theta \log \pi_{\theta_0}(y_i^- \mid x_i) \right) \in \mathbb{R}^p.
  \label{eq:feature_vector}
\end{equation}
Here $p$ denotes the total number of parameters. Each $\Phi_i$ encodes the direction in parameter space along which a perturbation $\Delta\theta$ most effectively increases the preference margin for pair $i$. Under this linearization, the DPO loss reduces to a binary logistic regression over the features $\{\Phi_i\}_{i=1}^n$ with offsets $b_i := m_i(\theta_0)$:
\begin{equation}
  \ell_i(\Delta\theta) = \log\!\left(1 + \exp\!\left(-\langle \Phi_i,\, \Delta\theta \rangle - b_i\right)\right).
  \label{eq:logistic_regression}
\end{equation}
With the standard DPO initialization $\pi_{\theta_0} = \pi_{\mathrm{ref}}$, the offsets vanish: $b_i = m_i(\theta_0) = 0$ by \cref{eq:margin}.

We make the following assumption, which is standard in the analysis of implicit bias for logistic regression~\citep{soudry2018implicit}:

\begin{assumption}[Linear Separability]
\label{asm:separability}
There exist $\Delta\theta \in \mathbb{R}^p$ and $\gamma > 0$ such that $\langle \Phi_i,\, \Delta\theta \rangle + b_i \geq \gamma$ for all $i \in [n]$.
\end{assumption}

Under \cref{asm:separability}, the gradient flow on the total loss $L(\Delta\theta) = \sum_i \ell_i(\Delta\theta)$ starting from $\Delta\theta(0) = 0$ admits the explicit trajectory
\begin{equation}
  \Delta\theta(T) = \sum_{i=1}^n A_i(T)\, \Phi_i,
  \quad A_i(T) := \int_0^T \sigma(-m_i(\theta(t)))\, dt,
  \label{eq:trajectory}
\end{equation}
where $A_i(T)$ is the time-integrated gradient weight for pair $i$. Applying the implicit-bias result of \citet{soudry2018implicit} to this setup yields:

\begin{lemma}[Implicit Bias of DPO]
\label{lem:implicit_bias}
Under \cref{asm:separability}, the normalized parameter update converges to the $L_2$ maximum-margin direction:
\begin{equation}
  \frac{\Delta\theta(T)}{\|\Delta\theta(T)\|} \xrightarrow{T \to \infty} u^\star,
  \quad \text{where $u^\star$ solves} \quad
  \max_{\|u\| \leq 1} \min_{i \in [n]} \left\{ \langle u,\, \Phi_i \rangle + b_i \right\}.
  \label{eq:convergence}
\end{equation}
By the Karush--Kuhn--Tucker (KKT) conditions, $u^\star$ admits the representation
\begin{equation}
  u^\star = \sum_{i \in \mathrm{SV}} \alpha_i^\star\, \Phi_i, \qquad \alpha_i^\star \geq 0,
  \label{eq:kkt}
\end{equation}
where $\mathrm{SV} := \{i : \alpha_i^\star > 0\}$ is the set of support pairs.
\end{lemma}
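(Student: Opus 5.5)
The plan is to reduce \cref{lem:implicit_bias} to the theorem of \citet{soudry2018implicit} on gradient descent/flow for separable logistic regression, working in the linearized model \cref{eq:logistic_regression}. We take the standard DPO initialization, so $b_i = 0$, and the loss is the homogeneous logistic loss $L(\Delta\theta) = \sum_i \log(1+e^{-\langle \Phi_i,\Delta\theta\rangle})$. This is convex and smooth, with smoothness constant bounded by $\tfrac14\sum_i\|\Phi_i\|^2$. \Cref{asm:separability} gives a strictly separating direction. With these facts, the Soudry et al.\ result states that $\|\Delta\theta(T)\|\to\infty$ and that $\Delta\theta(T)/\|\Delta\theta(T)\| \to \hat w/\|\hat w\|$. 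Here $\hat w$ is the hard-margin SVM solution $\arg\min\{\|w\|^2 : \langle w,\Phi_i\rangle\ge 1\ \forall i\}$. That result is stated for gradient descent with small step size. For gradient flow we either cite the continuous-time version (e.g.\ Ji--Telgarsky) or take the step-size-to-zero limit.

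Next we identify $\hat w/\|\hat w\|$ with $u^\star$. The problem $\max_{\|u\|\le1}\min_i\langle u,\Phi_i\rangle$ has a positive optimal value $\gamma^\star$ by separability. Since the margin is positive, the optimum lies on $\|u\|=1$. It is unique, because the objective is concave and the feasible set is a strictly convex ball. Rescaling $w = u/\gamma^\star$ maps this problem onto the SVM problem, so $u^\star = \hat w/\|\hat w\| = \gamma^\star \hat w$. If nonzero offsets $b_i$ are kept, the problem is no longer homogeneous. In that case we argue asymptotically: along the diverging trajectory the bounded offsets are negligible relative to the margin scale $\|\Delta\theta\|$, so the limiting direction is the same homogeneous max-margin direction. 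This is the reading of \cref{eq:convergence} we adopt in the $b_i=0$ setting.

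For the representation \cref{eq:kkt}, we write the KKT conditions of the convex SVM program. Slater's condition holds by separability. Stationarity gives $\hat w = \sum_i \lambda_i \Phi_i$ with $\lambda_i\ge0$. Complementary slackness gives $\lambda_i(\langle \hat w,\Phi_i\rangle -1)=0$, so only pairs attaining the minimum margin carry positive weight. Setting $\alpha_i^\star = \gamma^\star\lambda_i$ yields $u^\star=\sum_{i\in\mathrm{SV}}\alpha_i^\star\Phi_i$. As a consistency check, \cref{eq:trajectory} already places $\Delta\theta(T)$ in the cone spanned by the $\Phi_i$, with coefficients $A_i(T)$. Soudry et al.'s analysis shows $\Delta\theta(T) = \hat w\log T + \rho(T)$ with $\rho$ bounded, so $A_i(T)/\log T$ stays bounded for non-support pairs.

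The main obstacle is being precise about which regime is proved. The linearization \cref{eq:margin_linearized} is only exact in the NTK limit, while the result concerns $T\to\infty$, where $\|\Delta\theta\|$ diverges. The lemma should therefore be stated and proved for the linearized objective \cref{eq:logistic_regression} itself, not for the true network. A secondary subtlety is that the multipliers $\alpha^\star$ need not be unique when the $\Phi_i$ are linearly dependent, which is likely for $n$ small relative to $p$. We handle this by fixing any KKT multiplier vector and defining $\mathrm{SV}$ relative to it. We note that the active set $\{i:\langle u^\star,\Phi_i\rangle=\gamma^\star\}$ always contains $\mathrm{SV}$.
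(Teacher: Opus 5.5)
Your proposal is correct and matches the paper's argument: directional convergence is imported from \citet{soudry2018implicit} for the linearized logistic objective with $b_i=0$ (the only case the paper actually uses), and \cref{eq:kkt} follows from the KKT conditions of the equivalent hard-margin SVM with $\alpha_i^\star=\gamma_\star\lambda_i$, which is exactly the paper's $\alpha_i^\star=\hat\alpha_i/\|w^\star\|$ in \cref{app:proof_sv_bound}. One small slip in a side remark: linear dependence of the $\Phi_i$ is generic when $n$ exceeds $p$, not when $n$ is small relative to $p$, so in the paper's $p\gg n$ regime the multipliers are typically unique.
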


\begin{remark}[Empirical observability of dual variables]
The normalized time-integrated coefficients $\alpha_i(T) := A_i(T) / \|\Delta\theta(T)\|$ also converge to the dual variables $\alpha_i^\star$ as $T \to \infty$. This justifies tracking $A_i$ as an empirical proxy for the support set in \cref{sec:exp_data_sparsity}.
\end{remark}

\subsection{Dual Sparsity of DPO Updates}
\label{sec:dual_sparsity}

\Cref{lem:implicit_bias} establishes that the DPO update direction is fully determined by the support pairs and the corresponding feature vectors. We now identify two distinct sources of sparsity in this representation.

\begin{definition}[Dual Sparsity]
\label{def:dual_sparsity}
The DPO update direction $u^\star = \sum_{i \in \mathrm{SV}} \alpha_i^\star \Phi_i$ is said to exhibit \emph{dual sparsity} if the following two conditions hold:
\begin{enumerate}[label=(\roman*)]
  \item \textbf{Data-level sparsity:} The number of support pairs is small relative to the dataset, i.e., $|\mathrm{SV}| \ll n$.
  \item \textbf{Parameter-level sparsity:} Each gradient feature vector $\Phi_i$ is sparse, i.e., $\|\Phi_i\|_0 \ll p$ for $i \in \mathrm{SV}$.
\end{enumerate}
\end{definition}

The data-level sparsity is a direct consequence of the maximum-margin structure:

\begin{lemma}[Data-Level Sparsity]
\label{lem:data_sparsity}
Suppose $\pi_{\theta_0} = \pi_{\mathrm{ref}}$ (so that $b_i = 0$) and \cref{asm:separability} holds. Let $K := \Phi^\top \Phi \in \mathbb{R}^{n \times n}$, with $K_{ij} = \langle \Phi_i, \Phi_j \rangle$, be the Gram matrix of $\Phi := [\Phi_1 \;\; \Phi_2 \;\; \cdots \;\; \Phi_n]$, and let $\gamma_\star := \min_{i \in [n]} \langle u^\star, \Phi_i \rangle > 0$ be the maximum margin. Then, for the linearized model \cref{eq:logistic_regression},
\begin{equation}
  |\mathrm{SV}|\, \gamma_\star^2 \;\le\; \lambda_{\max}(K_{\mathrm{SV},\mathrm{SV}}) \;\le\; \lambda_{\max}(K),
  \qquad\text{hence}\qquad
  |\mathrm{SV}| \;\le\; \min\left\{ n,\ \left\lfloor \frac{\lambda_{\max}(K)}{\gamma_\star^2} \right\rfloor \right\},
  \label{eq:sv_bound}
\end{equation}
where $K_{\mathrm{SV},\mathrm{SV}}$ is the principal submatrix of $K$ indexed by $\mathrm{SV}$.
\end{lemma}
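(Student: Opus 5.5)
The plan is a direct Cauchy--Schwarz/Rayleigh-quotient argument on the all-ones vector supported on $\mathrm{SV}$. It turns out not to need the fine structure of the dual variables in \cref{lem:implicit_bias}; it uses only the fact that every pair, and in particular every support pair, attains margin at least $\gamma_\star$. First, with $b_i = 0$, \cref{asm:separability} gives a feasible direction with positive margin. So the optimal value of \cref{eq:convergence} is positive, $\gamma_\star > 0$, and the optimum is attained on the boundary $\|u^\star\| = 1$, since scaling up any $u$ with $\|u\|<1$ would strictly increase a positive minimum margin. The set $\mathrm{SV}$ is nonempty because $u^\star \neq 0$ and \cref{eq:kkt} holds.

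The key step is as follows. Let $s := |\mathrm{SV}|$ and let $\mathbf{1}_{\mathrm{SV}} \in \mathbb{R}^{s}$ be the all-ones vector, and set $v := \sum_{i \in \mathrm{SV}} \Phi_i$. On one hand, by the Rayleigh quotient,
\begin{equation*}
  \|v\|^2 = \mathbf{1}_{\mathrm{SV}}^\top K_{\mathrm{SV},\mathrm{SV}} \mathbf{1}_{\mathrm{SV}} \le \lambda_{\max}(K_{\mathrm{SV},\mathrm{SV}})\, \|\mathbf{1}_{\mathrm{SV}}\|^2 = s\, \lambda_{\max}(K_{\mathrm{SV},\mathrm{SV}}).
\end{equation*}
On the other hand, since $\langle u^\star, \Phi_i \rangle \ge \gamma_\star$ for every $i$ and $\|u^\star\| = 1$, Cauchy--Schwarz gives
\begin{equation*}
  \|v\| \ge \langle u^\star, v \rangle = \sum_{i \in \mathrm{SV}} \langle u^\star, \Phi_i \rangle \ge s\, \gamma_\star .
\end{equation*}
Combining the two inequalities yields $s^2 \gamma_\star^2 \le s\, \lambda_{\max}(K_{\mathrm{SV},\mathrm{SV}})$. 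Dividing by $s > 0$ gives the first inequality in \cref{eq:sv_bound}.

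The second inequality, $\lambda_{\max}(K_{\mathrm{SV},\mathrm{SV}}) \le \lambda_{\max}(K)$, follows from Cauchy interlacing for principal submatrices of the positive semidefinite matrix $K$. Equivalently, it follows from restricting the variational characterization $\lambda_{\max}(K) = \max_{\|z\|=1} z^\top K z$ to vectors $z$ supported on $\mathrm{SV}$. Dividing by $\gamma_\star^2 > 0$, using that $|\mathrm{SV}|$ is an integer to take the floor, and adding the trivial bound $|\mathrm{SV}| \le n$ gives the final statement.

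I expect no serious obstacle. The only points needing care are justifying $\|u^\star\| = 1$ and $\gamma_\star > 0$, both of which rely on $b_i = 0$ and separability. One should also note that the argument applies verbatim to any subset of pairs whose margins are at least $\gamma_\star$. In particular, the bound does not depend on complementary slackness, although the KKT conditions would additionally give equality $\langle u^\star, \Phi_i \rangle = \gamma_\star$ on $\mathrm{SV}$.
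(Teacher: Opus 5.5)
Your proof is correct and is essentially the paper's argument: sum over $\mathrm{SV}$, apply Cauchy--Schwarz with $\|u^\star\|=1$, use the Rayleigh quotient on $\mathbf{1}_{\mathrm{SV}}$ and Cauchy interlacing, then integrality; the only difference is that you use $\langle u^\star,\Phi_i\rangle\ge\gamma_\star$ where the paper invokes complementary slackness to get equality on $\mathrm{SV}$. Your closing remark is sharper than you realize: every $i\in[n]$ has margin at least $\gamma_\star$ by definition, so running your argument on all of $[n]$ gives $n\gamma_\star^2\le\lambda_{\max}(K)$, hence $\lfloor\lambda_{\max}(K)/\gamma_\star^2\rfloor\ge n$, and the stated bound (by either proof) never improves on the trivial $|\mathrm{SV}|\le n$.
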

The proof is given in \cref{app:proof_sv_bound}.

This bound is meaningful because, in the DPO setting, the gradient flow concentrates its cumulative weight $A_i(T)$ on the pairs that remain difficult throughout training---those near the margin boundary---while the remaining pairs receive diminishing gradient signal.
Since the bound is governed by the ratio $\lambda_{\max}(K)/\gamma_\star^2$ rather than explicitly by $n$, the effective support set is provably $\ll n$ rather than merely $\le \mathrm{rank}(\Phi)$ whenever the margin is large relative to the spectral scale of the features; we confirm empirically that this is the case on real preference data (see \cref{sec:exp_data_sparsity}).

For the parameter-level sparsity, we introduce the following assumption:

\begin{assumption}[Sparsity of Gradient Features]
\label{asm:sparse_features}
For each $i \in [n]$, the gradient feature vector $\Phi_i$ satisfies $\|\Phi_i\|_0 \leq s$ for some $s \ll p$.
\end{assumption}

\begin{remark}[Motivation for \cref{asm:sparse_features}]
\label{rem:assumption_motivation}
In autoregressive language models, the preferred and dispreferred responses $y_i^+$ and $y_i^-$ for a given prompt $x_i$ typically share a substantial number of tokens---particularly in the initial portion of the response. Since $\Phi_i$ is defined as the difference of gradients $\beta(\nabla_\theta \log \pi_{\theta_0}(y_i^+ \mid x_i) - \nabla_\theta \log \pi_{\theta_0}(y_i^- \mid x_i))$, the parameter-level contributions from shared tokens cancel, leaving nonzero entries only in the parameters responsible for distinguishing between $y_i^+$ and $y_i^-$. This cancellation effect provides a structural basis for \cref{asm:sparse_features}. We verify this assumption empirically in \cref{sec:experiments}.
\end{remark}

Combining \cref{lem:implicit_bias,lem:data_sparsity} with \cref{asm:sparse_features}, we arrive at the main result:

\begin{theorem}[Dual Sparsity of DPO]
\label{thm:dual_sparsity}
Under \cref{asm:separability,asm:sparse_features}, the DPO update direction $u^\star$ exhibits dual sparsity in the sense of \cref{def:dual_sparsity}. That is, $u^\star$ is a sparse linear combination of sparse vectors:
\begin{equation}
  u^\star = \sum_{i \in \mathrm{SV}} \alpha_i^\star\, \Phi_i, \quad |\mathrm{SV}| \ll n, \quad \|\Phi_i\|_0 \leq s \ll p.
  \label{eq:dual_sparsity}
\end{equation}
Consequently, $u^\star$ is itself sparse with $\|u^\star\|_0 \leq |\mathrm{SV}| \cdot s$.
\end{theorem}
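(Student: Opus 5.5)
The argument is essentially an assembly of the earlier results, followed by a support-counting step. First I invoke \cref{lem:implicit_bias}. Under \cref{asm:separability}, the normalized gradient-flow iterate converges to the max-margin direction $u^\star$. The KKT conditions for $\min \tfrac12\|u\|^2$ subject to $\langle u,\Phi_i\rangle + b_i \ge 1$, rescaled to the unit-norm formulation, give $u^\star = \sum_{i\in\mathrm{SV}}\alpha_i^\star\Phi_i$ with $\alpha_i^\star\ge 0$. Complementary slackness forces $\alpha_i^\star=0$ for every pair whose margin constraint is not active. This establishes the representation in \cref{eq:dual_sparsity}.

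Second, for data-level sparsity I appeal to \cref{lem:data_sparsity} in the standard initialization $b_i=0$. It gives $|\mathrm{SV}|\le \min\{n,\lfloor\lambda_{\max}(K)/\gamma_\star^2\rfloor\}$. The claim $|\mathrm{SV}|\ll n$ should be read in the sense of \cref{def:dual_sparsity}, i.e.\ as holding whenever $\lambda_{\max}(K)/\gamma_\star^2\ll n$. I will state explicitly that this quantitative part is inherited from the lemma rather than proved unconditionally. The bound can also be cross-checked by Carath\'eodory's theorem: $u^\star$ lies in the cone spanned by the active $\Phi_i$, so $\mathrm{SV}$ can be chosen with $|\mathrm{SV}|\le \mathrm{rank}(\Phi)$.

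Third, for parameter-level sparsity I use \cref{asm:sparse_features} directly: $\|\Phi_i\|_0\le s$ for every $i$, in particular for $i\in\mathrm{SV}$.

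Finally, for the consequence $\|u^\star\|_0\le|\mathrm{SV}|\cdot s$, I note that $\mathrm{supp}(\sum_{i\in\mathrm{SV}}\alpha_i^\star\Phi_i)\subseteq\bigcup_{i\in\mathrm{SV}}\mathrm{supp}(\Phi_i)$. A coordinate can be nonzero in the sum only if it is nonzero in some summand, and cancellations only shrink the support. Subadditivity of cardinality over unions then gives $\|u^\star\|_0\le\sum_{i\in\mathrm{SV}}\|\Phi_i\|_0\le |\mathrm{SV}|\,s$.

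The main obstacle is not technical but interpretive: ``$\ll$'' is not a precise mathematical relation. The proof must make clear which parts are rigorous and which inherit the conditional content of \cref{lem:data_sparsity} and \cref{asm:sparse_features}. The rigorous parts are the KKT representation and the $\ell_0$ bound. A minor subtlety is that the dual variables need not be unique when the $\Phi_i$ are linearly dependent. I would handle this by fixing a particular optimal dual solution, e.g.\ a Carath\'eodory-minimal one, and defining $\mathrm{SV}$ with respect to it. The support bound holds for any such choice, since $u^\star$ itself is unique by strict convexity of the norm.
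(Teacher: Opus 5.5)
Your proposal is correct and follows the paper's argument: the paper gives no separate proof and simply combines \cref{lem:implicit_bias} (the KKT representation), \cref{lem:data_sparsity} (with $b_i=0$, giving $|\mathrm{SV}|\le\lambda_{\max}(K)/\gamma_\star^2$) and \cref{asm:sparse_features}, with $\|u^\star\|_0\le|\mathrm{SV}|\,s$ following from $\mathrm{supp}(u^\star)\subseteq\bigcup_{i\in\mathrm{SV}}\mathrm{supp}(\Phi_i)$. Your extra remarks are sound points the paper leaves implicit: $|\mathrm{SV}|\ll n$ holds only when the spectral ratio is small, and the proof of \cref{lem:data_sparsity} uses only $\|u^\star\|=1$ and $\langle u^\star,\Phi_i\rangle=\gamma_\star$ on $S$, so the bound covers the whole active set and hence any choice of optimal dual variables.
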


\subsection{Connection to Weight Disentanglement}
\label{sec:wd_connection}

\Cref{thm:dual_sparsity} establishes that DPO updates exhibit dual sparsity. We now examine how this structure relates to the weight disentanglement framework reviewed in \cref{sec:weight_disentanglement}. The original formulation of weight disentanglement (\cref{def:weight_disentanglement}) is stated for a generic function $f\colon \mathcal{X} \times \Theta \to \mathcal{Y}$; applying it to the DPO setting requires identifying the appropriate input space and function in preference learning.

\paragraph{Triplet-space formulation.}
We take the input space to be the space of preference triplets $\tilde{\mathcal{X}} := \{(x, y^+, y^-)\}$ and define $f\colon \tilde{\mathcal{X}} \times \Theta \to \mathbb{R}$ as the DPO log-probability margin
\begin{equation}
  f(\tilde{x}; \theta) := \beta\!\left(\log \pi_\theta(y^+ \mid x) - \log \pi_\theta(y^- \mid x)\right), \quad \tilde{x} = (x, y^+, y^-).
  \label{eq:dpo_margin_function}
\end{equation}
Differentiating at $\theta_0$ recovers the gradient feature vector of \cref{sec:max_margin}: $\nabla_\theta f(\tilde{x}; \theta_0) = \Phi(\tilde{x})$. The same features that drive dual sparsity therefore serve as the Jacobian relevant for weight disentanglement. A task $t$ is associated with a preference dataset $\mathcal{D}_t \subseteq \tilde{\mathcal{X}}$ collected under task-specific annotation criteria.

\paragraph{Linearized decomposition.}
For task vectors $\{\tau_t\}_{t \in [T]}$ and coefficients $\{\lambda_t\}_{t \in [T]}$, a first-order Taylor expansion of $f$ around $\theta_0$ gives
\begin{equation}
  f\!\left(\tilde{x};\, \theta_0 + \sum_{t=1}^{T} \lambda_t \tau_t\right) \;\approx\; f(\tilde{x}; \theta_0) + \sum_{t=1}^{T} \lambda_t \langle \tau_t,\, \Phi(\tilde{x}) \rangle.
  \label{eq:wd_taylor}
\end{equation}
\Cref{eq:wd_taylor} matches the structural form of the weight disentanglement decomposition (\cref{def:weight_disentanglement}) under the identification
\begin{equation}
  g_0(\tilde{x}) := f(\tilde{x}; \theta_0), \qquad g_t(\tilde{x};\, \lambda_t \tau_t) := \lambda_t \langle \tau_t,\, \Phi(\tilde{x}) \rangle.
  \label{eq:wd_identification}
\end{equation}
With this identification, the disentanglement condition $g_t(\tilde{x};\, \lambda_t \tau_t) = 0$ for $\tilde{x} \notin \mathcal{D}_t$ reduces to the inner-product condition
\begin{equation}
  \langle \tau_t,\, \Phi(\tilde{x}) \rangle \;\approx\; 0 \quad \text{for } \tilde{x} \notin \mathcal{D}_t.
  \label{eq:wd_condition}
\end{equation}

\begin{observation}[Linearized weight disentanglement under DPO]
\label{obs:linearized_wd}
Under the NTK linearization, the DPO margin $f$ admits the decomposition \cref{eq:wd_taylor}, and the identification \cref{eq:wd_identification} places this decomposition in the form of weight disentanglement (\cref{def:weight_disentanglement}). The disentanglement condition holds to the extent that the cross-task inner product \cref{eq:wd_condition} is small.
\end{observation}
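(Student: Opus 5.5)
The plan is to prove \cref{obs:linearized_wd} as a quantitative statement. I will show that, up to the second-order Taylor remainder, the DPO margin $f$ of \cref{eq:dpo_margin_function} equals a weight-disentangled decomposition in the sense of \cref{def:weight_disentanglement}. The defect of that decomposition will be exactly the sum of cross-task inner products $\lambda_t\langle\tau_t,\Phi(\tilde x)\rangle$ over tasks with $\tilde x\notin\mathcal{D}_t$.

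\textbf{Step 1: Taylor expansion with remainder.} Set $\Delta:=\sum_t\lambda_t\tau_t$ and assume $f(\cdot;\theta)$ is $C^2$ in $\theta$ on the segment $[\theta_0,\theta_0+\Delta]$. Taylor's theorem gives
\[
f(\tilde x;\theta_0+\Delta)=f(\tilde x;\theta_0)+\sum_t\lambda_t\langle\tau_t,\nabla_\theta f(\tilde x;\theta_0)\rangle+R(\tilde x),
\]
with $|R(\tilde x)|\le\tfrac12\sup_{\theta\in[\theta_0,\theta_0+\Delta]}\|\nabla^2_\theta f(\tilde x;\theta)\|\,\|\Delta\|^2$. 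Differentiating \cref{eq:dpo_margin_function} term by term shows $\nabla_\theta f(\tilde x;\theta_0)=\Phi(\tilde x)$, by comparison with \cref{eq:feature_vector}. This yields \cref{eq:wd_taylor} with an explicit error. In the NTK regime the Hessian term is small, which is what justifies dropping $R$.

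\textbf{Step 2: the identification.} Adopt \cref{eq:wd_identification}. Then \cref{eq:wd_taylor} has the additive form $\sum_t g_t(\tilde x;\lambda_t\tau_t)+g_0(\tilde x)$ required by \cref{eq:weight_disentanglement}. Two side conditions of \cref{def:weight_disentanglement} remain.
\begin{enumerate}[label=(\alph*)]
  \item \emph{The residual $g_0$ must vanish on $\bigcup_t\mathcal{D}_t$.} Here I will center $f$ at the reference model, i.e.\ subtract $\beta\log(\pi_{\mathrm{ref}}(y^+|x)/\pi_{\mathrm{ref}}(y^-|x))$ as in \cref{eq:margin}. With $\pi_{\theta_0}=\pi_{\mathrm{ref}}$ this gives $g_0\equiv 0$, and it does not change $\Phi$. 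If one keeps the uncentered $f$, one instead rewrites $g_0=f(\cdot;\theta_0)$ as $f(\cdot;\theta_0)\mathbf 1(\tilde x\in\bigcup_t\mathcal{D}_t)+f(\cdot;\theta_0)\mathbf 1(\tilde x\notin\bigcup_t\mathcal{D}_t)$ and absorbs the first piece into the $g_t$. I will state this bookkeeping explicitly, which is why the observation claims only the ``structural form''.
  \item \emph{Each $g_t$ must vanish off $\mathcal{D}_t$.} Since $g_t$ is linear in $\lambda_t$, this holds for all $\lambda_t$ if and only if \cref{eq:wd_condition} holds with equality.
\end{enumerate}

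\textbf{Step 3: quantifying ``to the extent that''.} Define the truncated functions $\tilde g_t:=g_t\mathbf 1_{\mathcal{D}_t}$. These satisfy \cref{def:weight_disentanglement} exactly. Their deviation from the true margin is
\[
\Bigl|f(\tilde x;\theta_0+\Delta)-\sum_t\tilde g_t-g_0\Bigr|\;\le\;\sum_{t:\,\tilde x\notin\mathcal{D}_t}|\lambda_t|\,|\langle\tau_t,\Phi(\tilde x)\rangle|+|R(\tilde x)|.
\]
So the disentanglement error is controlled by the cross-task inner products, which is the claim. As an optional supplement I will invoke \cref{thm:dual_sparsity}. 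Since $\tau_t\propto u_t^\star$ has support $S_t$ with $|S_t|\le|\mathrm{SV}_t|\,s$, the inner product reduces to the overlap $S_t\cap\mathrm{supp}\,\Phi(\tilde x)$. Cauchy--Schwarz on that overlap then bounds it by $\|\tau_t|_{S_t\cap\mathrm{supp}\Phi}\|\,\|\Phi(\tilde x)\|$. This links the defect to parameter-space overlap, which is an empirical question rather than something to be proved here.

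\textbf{Main obstacle.} The delicate point is Step 2(a), the mismatch between $g_0=f(\cdot;\theta_0)$ and the requirement that $g_0$ vanish on the task supports. I will handle it first via the reference-centering choice. The remainder $R$ in Step 1 can only be controlled under an explicit Hessian or NTK-width assumption, which I will state rather than prove.
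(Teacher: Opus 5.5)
Your proposal is correct and follows the paper's route: the paper's justification is exactly the first-order expansion \cref{eq:wd_taylor} with $\nabla_\theta f(\tilde{x};\theta_0)=\Phi(\tilde{x})$, the identification \cref{eq:wd_identification}, and the reduction of ``$g_t$ vanishes off $\mathcal{D}_t$'' to \cref{eq:wd_condition}. Your explicit Taylor remainder, your handling of the fact that the uncentered $g_0=f(\cdot;\theta_0)$ need not vanish on $\bigcup_t\mathcal{D}_t$ (which the paper covers only by saying the decomposition matches the ``structural form''), and your truncation estimate are sound refinements that make ``$\approx$'' and ``to the extent that'' precise; the one caveat is in your optional supplement, where $\tau_t\propto u_t^\star$ holds only for the limiting direction (at finite $T$, \cref{eq:trajectory} gives $\tau_t=\sum_i A_i(T)\Phi_i$ with every $A_i(T)>0$), so $|S_t|\le|\mathrm{SV}_t|\,s$ is a limit idealization, as in the paper's own discussion, but the observation itself does not depend on it.
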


\paragraph{Connection to dual sparsity.}
\Cref{lem:implicit_bias} establishes that, in the limit, the normalized task vector for task $t$ aligns with the maximum-margin direction $u_t^\star = \sum_{i \in \mathrm{SV}_t} \alpha_{t,i}^\star\, \Phi(\tilde{x}_i^{(t)})$. Substituting into \cref{eq:wd_identification}, the local function $g_t$ takes the form
\begin{equation}
  g_t(\tilde{x};\, \lambda_t \tau_t) \;\propto\; \langle u_t^\star,\, \Phi(\tilde{x}) \rangle \;=\; \Big\langle \sum_{i \in \mathrm{SV}_t} \alpha_{t,i}^\star\, \Phi(\tilde{x}_i^{(t)}),\; \Phi(\tilde{x}) \Big\rangle.
  \label{eq:wd_implicit_bias}
\end{equation}
The cross-task interference \cref{eq:wd_condition} is therefore governed by $\langle u_1^\star,\, \Phi(\tilde{x}) \rangle$ for $\tilde{x} \in \mathcal{D}_2$. By \cref{thm:dual_sparsity}, $u_1^\star$ is a sparse combination of sparse vectors; \cref{asm:sparse_features} ensures $\Phi(\tilde{x})$ is also sparse. When the active parameter supports of $u_1^\star$ and $\{\Phi(\tilde{x})\}_{\tilde{x} \in \mathcal{D}_2}$ have small overlap, this inner product is small. Whether the overlap is in fact small for naturally occurring preference tasks is an empirical question, which we examine in \cref{sec:experiments}.

\paragraph{Remark on the disjoint-support assumption.}
\Cref{def:task_arithmetic} assumes disjoint task supports $\mathcal{D}_t \cap \mathcal{D}_{t'} = \emptyset$. In the triplet-space formulation this is naturally satisfied: distinct preference datasets arise from distinct annotation processes, and an ordered triplet $(x, y^+, y^-)$ is tied to the task it was annotated under.

%% ============================================================

\section{Experiments}
\label{sec:experiments}

We empirically validate the theoretical predictions of \cref{sec:theoretical_analysis}.
Specifically, we verify (i)~data-level sparsity of the dual coefficients $\alpha_i^\star$ (\cref{lem:data_sparsity}), (ii)~parameter-level sparsity of the gradient features $\Phi_i$ (\cref{asm:sparse_features}), and (iii)~suppression of cross-task interference under DPO compared to SFT (\cref{sec:exp_interference}).

\subsection{Experimental Setup}
\label{sec:setup}

\paragraph{Models.}
We use GPT-2 Small and GPT-2 Medium~\citep{radford2019language}, together with three instruction-tuned models with 7--9B parameters: Llama-3-8B-Instruct~\citep{grattafiori2024llama}, Mistral-7B-Instruct-v0.2~\citep{jiang2023mistral}, and Gemma-2-9B-it~\citep{gemmateam2024gemma2}. Each checkpoint serves as both the initialization and the reference policy, $\pi_{\theta_0} = \pi_{\mathrm{ref}}$.

\paragraph{Datasets.}
For our primary experiments, we use the Anthropic HH-RLHF dataset~\citep{bai2022training}, which consists of two subsets: \emph{helpful} (evaluating which response is more useful to the user) and \emph{harmless} (evaluating which response avoids harmful content).
These two distinct subsets serve as the tasks $\mathcal{T}_1$ and $\mathcal{T}_2$ in our cross-task interference analysis.
To assess generality, we additionally use the UltraFeedback Binarized dataset~\citep{cui2024ultrafeedback}, a large-scale preference dataset annotated by GPT-4 across four evaluation axes (instruction-following, truthfulness, honesty, and helpfulness).

\paragraph{Gradient features.}
The gradient feature vectors $\Phi_i = \beta(\nabla_\theta \log \pi_{\theta_0}(y_i^+ \mid x_i) - \nabla_\theta \log \pi_{\theta_0}(y_i^- \mid x_i))$ are computed empirically using the Jacobian of the model at initialization, since the analytical NTK cannot be obtained in closed form for self-attention architectures.
Because storing full-dimensional gradient vectors ($\Phi_i \in \mathbb{R}^p$ with $p > 10^8$) is memory-intensive, experiments that require $\Phi_i$ are performed on subsamples of the datasets (see details in \cref{app:exp_details}).

The models, datasets, and numbers of preference pairs used in each experiment, together with the computation of the time-integrated weights $A_i$, are detailed in \cref{app:exp_details}.

%% ----------------------------------------------------------
\subsection{Data-Level Sparsity}
\label{sec:exp_data_sparsity}

\Cref{lem:data_sparsity} predicts that the DPO update direction is driven by a small number of support pairs, with the remaining pairs contributing negligibly. To verify this, we examine how the time-integrated weights $A_i(T)$ are distributed across pairs as training progresses.

\Cref{fig:data_sparsity} visualizes $A_i$ for GPT-2 Small and Medium on $N = 100$ pairs from HH-RLHF at 1, 100, and 1000 epochs. At early stages, the weights are distributed nearly uniformly. As training progresses, the distribution becomes increasingly polarized: a small subset of pairs accumulates large weights while the rest stop growing (see \cref{app:logps} for the corresponding log-probabilities). This is the behavior predicted by \cref{lem:implicit_bias}, under which the normalized coefficients $\alpha_i(T)$ converge to the dual variables $\alpha_i^\star$, which are nonzero only for the support pairs.

\Cref{tab:data_sparsity} quantifies this concentration on $N = 1000$ pairs by the top-10\% mass, i.e., the fraction of $\sum_i A_i$ held by the top 10\% of pairs (10\% under a uniform distribution). For GPT-2, the top-10\% mass grows from about 11\% to 40--44\% over 1000 epochs; for Llama-3-8B-Instruct and Mistral-7B-Instruct-v0.2, it grows from about 11\% to 44--52\% within three epochs of full fine-tuning DPO.
The numerical evaluation of the bound in \cref{lem:data_sparsity} in the setting of \cref{fig:data_sparsity} is reported in \cref{app:sv_numerical}.

\begin{figure}[t]
  \centering
  \includegraphics[width=\linewidth]{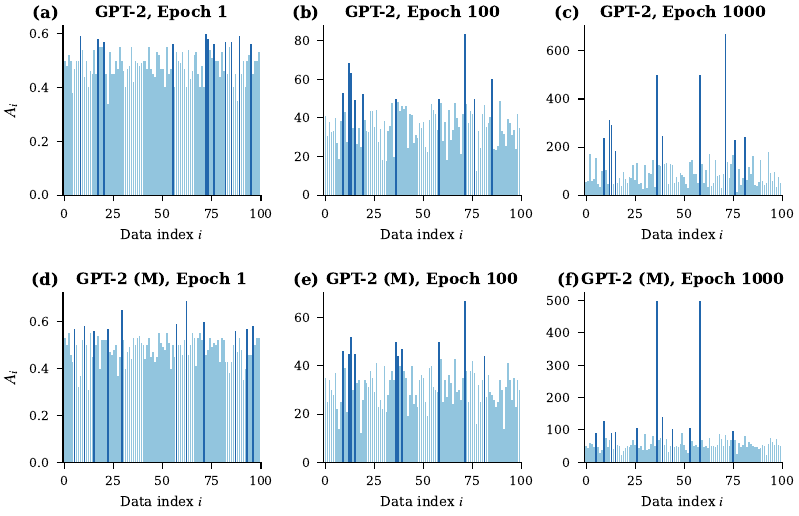}
  \caption{Distribution of $A_i$ across data pairs for GPT-2 Small (top) and Medium (bottom) at 1 epoch (left), 100 epochs (middle), and 1000 epochs (right) on HH-RLHF. Dark bars indicate the top 10\% of pairs by weight. As training progresses, the distribution polarizes onto a small number of support pairs; this effect is more pronounced in the larger model.}
  \label{fig:data_sparsity}
\end{figure}

\begin{table}[t]
  \centering
  \caption{Top-10\% mass of $A_i$ on $N = 1000$ pairs, i.e., the fraction of $\sum_i A_i$ held by the top 10\% of pairs (10\% under a uniform distribution). Top: GPT-2 trained with DPO on HH-RLHF. Bottom: 7--8B models trained with full fine-tuning DPO on UltraFeedback Binarized.}
  \label{tab:data_sparsity}
  \small
  \begin{tabular}{llccc}
    \toprule
    \textbf{Model} & \textbf{Dataset} & \multicolumn{3}{c}{\textbf{Top-10\% mass (\%)}} \\
    \cmidrule(lr){3-5}
      & & Epoch 1 & Epoch 100 & Epoch 1000 \\
    \midrule
    GPT-2 Small  & HH-RLHF & $10.88$ & $28.78$ & $39.57$ \\
    GPT-2 Medium & HH-RLHF & $11.02$ & $32.32$ & $43.70$ \\
    \midrule
      & & Epoch 1 & Epoch 2 & Epoch 3 \\
    \midrule
    Llama-3-8B-Instruct      & UltraFeedback & $11.75$ & $41.82$ & $52.48$ \\
    Mistral-7B-Instruct-v0.2 & UltraFeedback & $11.23$ & $39.92$ & $43.77$ \\
    \bottomrule
  \end{tabular}
\end{table}

%% ----------------------------------------------------------
\subsection{Parameter-Level Sparsity}
\label{sec:exp_param_sparsity}

\Cref{asm:sparse_features} posits that the DPO gradient feature vectors $\Phi_i$ are sparse due to gradient cancellation between preferred and dispreferred responses.
To verify this and to contrast it with SFT, \cref{fig:param_sparsity} visualizes the element-wise distribution of a representative $\Phi_i$ under both methods (GPT-2 Medium, HH-RLHF, $N = 100$), and \cref{tab:param_sparsity} quantifies the fraction of near-zero entries for all five models ($N = 1000$, UltraFeedback Binarized).

Under DPO (\cref{fig:param_sparsity}a), the vast majority of entries are concentrated near zero, with only a small fraction taking on appreciable magnitude.
Under SFT (\cref{fig:param_sparsity}b), the distribution is substantially broader, reflecting the fact that the SFT feature vector $\Phi_i^{\mathrm{SFT}} = \nabla_\theta \log \pi_\theta(y_i^+ \mid x_i)$ involves only the preferred response and does not benefit from the gradient cancellation that arises in the DPO difference $\nabla_\theta \log \pi_{\theta_0}(y_i^+ \mid x_i) - \nabla_\theta \log \pi_{\theta_0}(y_i^- \mid x_i)$.
The same contrast holds at every scale we test (\cref{tab:param_sparsity}): from 124M to 9B parameters, more than 90\% of the entries of the DPO feature vector are near zero, whereas only 4--20\% of those of the SFT feature vector are.

\begin{figure}[t]
  \centering
  \includegraphics[width=\linewidth]{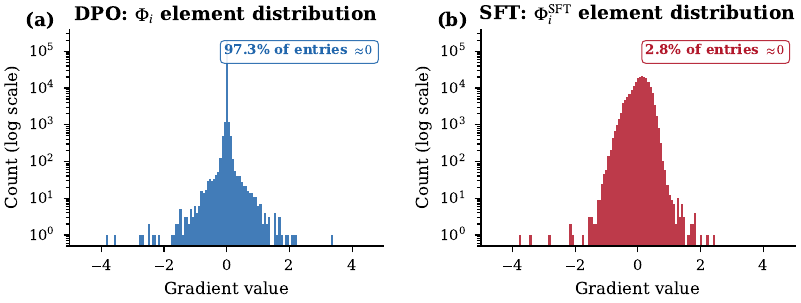}
  \caption{Element-wise distribution of gradient feature vectors on a logarithmic scale for GPT-2 Medium on $N = 100$ pairs from HH-RLHF. \textbf{(a)}~DPO: the vast majority of entries are near zero, confirming \cref{asm:sparse_features}. \textbf{(b)}~SFT: the distribution is substantially denser, since no gradient cancellation occurs. Near-zero fractions for all five models, computed in a separate setting, are reported in \cref{tab:param_sparsity}.}
  \label{fig:param_sparsity}
\end{figure}

As discussed in \cref{rem:assumption_motivation}, this contrast arises naturally from the structure of preference data in autoregressive models: since $y_i^+$ and $y_i^-$ typically share a large number of tokens---particularly in the initial portion of the response---the parameter-level gradients for these shared tokens cancel in the DPO feature vector, leaving nonzero contributions only for the parameters that distinguish between the two responses.

\begin{table}[t]
  \centering
  \caption{Fraction of near-zero entries in a representative gradient feature vector under DPO ($\Phi_i$) and SFT ($\Phi_i^{\mathrm{SFT}}$), computed from 1{,}000 pairs sampled from UltraFeedback Binarized. Higher values indicate sparser features.}
  \label{tab:param_sparsity}
  \small
  \begin{tabular}{lrcc}
    \toprule
    \textbf{Model} & \textbf{Params} & \textbf{DPO $\Phi_i$ (\%)} & \textbf{SFT $\Phi_i^{\mathrm{SFT}}$ (\%)} \\
    \midrule
    GPT-2 Small              & 124M & $91.29$ & $3.86$ \\
    GPT-2 Medium             & 355M & $94.82$ & $4.34$ \\
    Llama-3-8B-Instruct      & 8B   & $92.51$ & $16.44$ \\
    Mistral-7B-Instruct-v0.2 & 7B   & $90.93$ & $19.72$ \\
    Gemma-2-9B-it            & 9B   & $92.12$ & $13.57$ \\
    \bottomrule
  \end{tabular}
\end{table}

%% ----------------------------------------------------------
\subsection{Cross-Task Interference}
\label{sec:exp_interference}

We evaluate whether the dual sparsity of DPO updates translates into reduced cross-task interference, as discussed in \cref{sec:wd_connection}. We consider two task pairs of increasing semantic distance:
\begin{enumerate}[label=(\alph*)]
  \item \textbf{HH helpful $\leftrightarrow$ harmless}: two subsets of the same dataset (Anthropic HH-RLHF) with different annotation criteria---helpfulness vs.\ harmlessness.
  \item \textbf{HH helpful $\leftrightarrow$ UltraFeedback}: two entirely different datasets with different annotation methods (human vs.\ AI), different prompt distributions, and different preference dimensions.
\end{enumerate}
For the cross-dataset pair (b), only the gradient features $\Phi_j^{(2)}$ of UltraFeedback at $\theta_0$ are required; no DPO training on UltraFeedback is performed.

\paragraph{Baseline.}
We compare DPO against SFT. Unlike DPO, SFT does not produce the logistic loss structure that gives rise to implicit maximum-margin bias; consequently, all data pairs contribute to the update direction, and data-level sparsity does not emerge. Moreover, the SFT feature vector $\Phi_i^{\mathrm{SFT}} = \nabla_\theta \log \pi_\theta(y_i^+ \mid x_i)$ involves only the preferred response, so it does not benefit from the gradient cancellation that induces parameter-level sparsity in DPO.

\paragraph{Metrics.}
The disentanglement condition derived in \cref{sec:wd_connection} is a per-triplet condition $\langle \tau_t,\, \Phi(\tilde{x}) \rangle \approx 0$ for $\tilde{x} \notin \mathcal{D}_t$ (\cref{eq:wd_condition}). Verifying it directly would require evaluating $\langle u_1^\star,\, \Phi(\tilde{x}) \rangle$ for every $\tilde{x} \in \mathcal{D}_2$, which is computationally prohibitive given the scale of $\Phi(\tilde{x}) \in \mathbb{R}^p$ with $p > 10^8$. We therefore aggregate the per-triplet quantity into a single scalar summary by introducing the mean feature vector for task $t$:
\begin{equation}
  V_t := \frac{1}{|\mathcal{D}_t|} \sum_{\tilde{x} \in \mathcal{D}_t} \Phi(\tilde{x}).
  \label{eq:mean_feature}
\end{equation}
On this aggregated quantity, we evaluate two measures of cross-task overlap between $V_1$ and $V_2$:
\begin{itemize}
  \item \textbf{Top-1\% Jaccard overlap}: the Jaccard coefficient between the index sets of the top 1\% largest-magnitude entries in each mean feature vector. This captures the degree of overlap between the most active parameters of the two tasks---the structural condition under which the per-triplet inner product $\langle u_1^\star,\, \Phi(\tilde{x}) \rangle$ is small (\cref{sec:wd_connection}).
  \item \textbf{Interference} $\langle u_1^\star,\, V_2 \rangle$: the inner product between the DPO update direction for $\mathcal{T}_1$ and the mean feature vector for $\mathcal{T}_2$. This is the average of $\langle u_1^\star,\, \Phi(\tilde{x}) \rangle$ over $\tilde{x} \in \mathcal{D}_2$ and serves as a tractable summary of the per-triplet disentanglement condition \cref{eq:wd_condition}.
\end{itemize}

\begin{table}[t]
  \centering
  \caption{
    Cross-task interference for DPO and SFT across two model scales (GPT-2 Small/Medium) and two task pairs:
    \emph{HH help$\leftrightarrow$harm} (HH-RLHF helpful vs.\ harmless subsets) and \emph{HH help$\leftrightarrow$UF} (HH-RLHF helpful vs.\ UltraFeedback).
    Lower values indicate greater disentanglement.
    Across all configurations, DPO reduces the interference $\langle u_1^\star, V_2 \rangle$ by $11\times$ to $35\times$ relative to SFT.
  }
  \label{tab:interference}
  \small
  \begin{tabular}{llccc}
    \toprule
    \textbf{Task Pair} & \textbf{Method} & \textbf{Model} & \textbf{Jaccard (\%)} & $\boldsymbol{\langle u_1^\star, V_2 \rangle}$ \\
    \midrule
    \multirow{4}{*}{HH\,help$\leftrightarrow$harm}
      & SFT & Small  & $54.9$ & $9.12$ \\
      & DPO & Small  & $18.9$ & $0.258$ \\
      & SFT & Medium & $49.3$ & $3.22$ \\
      & DPO & Medium & $11.4$ & $0.191$ \\
    \midrule
    \multirow{4}{*}{HH\,help$\leftrightarrow$UF}
      & SFT & Small  & $68.4$ & $21.2$ \\
      & DPO & Small  & $25.2$ & $1.91$ \\
      & SFT & Medium & $60.5$ & $19.8$ \\
      & DPO & Medium & $17.7$ & $0.852$ \\
    \bottomrule
  \end{tabular}
\end{table}

\paragraph{Results.}
\Cref{tab:interference} reports the results. DPO suppresses cross-task interference by an order of magnitude or more in every configuration. Two qualitative patterns are notable: under DPO, both the Jaccard overlap and the interference decrease at the larger scale (e.g., $0.258 \to 0.191$ on the within-dataset pair and $1.91 \to 0.852$ on the cross-dataset pair), consistent with the sharper polarization of $A_i$ observed in \cref{sec:exp_data_sparsity}; and the same pattern holds for the cross-dataset pair (HH helpful $\leftrightarrow$ UltraFeedback), demonstrating that the effect generalizes beyond subsets of a single dataset to entirely independent preference data sources. These observations empirically support the framework of \cref{sec:wd_connection}: DPO's dual sparsity, combined with the small parameter-space overlap exhibited by semantically distinct preference tasks, yields cross-task inner products an order of magnitude smaller than those under SFT.

\paragraph{Model merging.}
To examine whether reduced interference carries over to merged models, we merge task vectors obtained by DPO or SFT on each task pair as $\theta_{\mathrm{merged}} = \theta_0 + \alpha (\tau_{\mathrm{target}} + \tau_{\mathrm{other}})$, $\alpha \in [0, 2]$ and measure the degradation $\Delta$ of each task's performance relative to its single-task model (\cref{app:merging}). As shown in \cref{tab:merging}, DPO yields a smaller $\Delta$ than SFT for both tasks in both task pairs, consistent with the reduced interference in \cref{tab:interference}.

\begin{table}[htbp]
  \centering
  \caption{Performance degradation $\Delta$ (single-task $-$ merged; lower is better) under task-vector merging on GPT-2 Medium. Task~1 is HH helpful; Task~2 is HH harmless or UltraFeedback. Full results are reported in \cref{app:merging}.}
  \label{tab:merging}
  \small
  \begin{tabular}{llcc}
    \toprule
    \textbf{Task Pair} & \textbf{Method} & \textbf{$\Delta$ Task~1} & \textbf{$\Delta$ Task~2} \\
    \midrule
    \multirow{2}{*}{HH help$\leftrightarrow$harm} & SFT & $0.110$ & $0.102$ \\
                                                  & DPO & $\mathbf{0.027}$ & $\mathbf{0.026}$ \\
    \midrule
    \multirow{2}{*}{HH help$\leftrightarrow$UF}   & SFT & $0.096$ & $0.030$ \\
                                                  & DPO & $\mathbf{0.036}$ & $\mathbf{0.014}$ \\
    \bottomrule
  \end{tabular}
\end{table}

%%%%%%%%%%%%%%%%%%%%%%%%%%%%%%%%%%%%%%%%%%%%%%%%%%%%%%%%%%%%

\section{Related Work}
\label{sec:related_work}

\paragraph{Optimization dynamics of DPO.}
Since \citet{rafailov2023direct} introduced DPO as a stable alternative to reinforcement learning from human feedback, several works have analyzed its dynamics theoretically. One line modifies the loss to address specific failure modes---IPO~\citep{azar2024general} avoids the Bradley--Terry assumption, and KTO~\citep{ethayarajh2024model} adopts a prospect-theoretic utility---but these are orthogonal to our analysis, which concerns the original objective. Closer to our approach, \citet{ren2025learning} linearize DPO under the NTK regime and use the resulting framework to characterize the \emph{squeezing effect}, a counter-intuitive decrease in the probability of preferred responses also studied by \citet{razin2025unintentional} under the term \emph{likelihood displacement}. We adopt the same NTK lens but extract a structurally distinct property: the \emph{direction} of the update in parameter space. Combining the classical implicit-bias result of \citet{soudry2018implicit} for separable logistic regression with the specific geometry of preference-pair gradients yields dual sparsity (\cref{thm:dual_sparsity}), which in turn provides a theoretical explanation for the small-subnetwork behavior observed empirically by \citet{mukherjee2025sparsity} on RL-based alignment. Likelihood displacement and dual sparsity are thus two complementary consequences of the same dynamics: one concerning the \emph{probabilities} the update changes, the other the \emph{parameters} it changes.

\paragraph{Task arithmetic, weight disentanglement, and model merging.}
\citet{ilharco2023editing} showed that linear arithmetic on task vectors compositionally steers model behavior, and \citet{ortizjimenez2023task} formalized weight disentanglement as a sufficient condition for this property, identifying via NTK tangent-space analysis that task vectors should act on disjoint regions of parameter space. A complementary line of work---DARE~\citep{yu2024language} and TIES-Merging~\citep{yadav2024tiesmerging}---engineers sparsity into task vectors as a \emph{post-hoc} intervention to suppress interference during merging. Our framework offers a different perspective: under DPO, sparsity emerges \emph{intrinsically} from the optimization dynamics, and the resulting parameter-space overlap between semantically distinct preference tasks is small enough that the disentanglement condition holds approximately without any explicit intervention (\cref{sec:exp_interference}).
Empirically, merging independently preference-optimized models has been reported to be competitive with or superior to data mixing, both for balancing safety and general-purpose capabilities~\citep{aakanksha2024mix} and for balancing helpfulness, honesty, and harmlessness~\citep{yang2025mix}.
Our analysis provides theoretical support for these empirical findings, attributing the effectiveness of merging DPO-trained models to the dual sparsity of their updates and the resulting small cross-task interference.
Our merging experiments (\cref{sec:exp_interference}) show that this translates into smaller performance degradation than SFT under task-vector merging; a larger-scale study of merging is left to future work.

%%%%%%%%%%%%%%%%%%%%%%%%%%%%%%%%%%%%%%%%%%%%%%%%%%%%%%%%%%%%

\section{Conclusion}
\label{sec:conclusion}

We have presented a theoretical analysis of why DPO induces sparse parameter updates. Under the NTK linearization, the logistic structure of the DPO loss drives the update direction toward a maximum-margin solution that exhibits dual sparsity: it is determined by a small number of support pairs, and the gradient feature vector of each pair is itself sparse due to gradient cancellation between preferred and dispreferred responses. Relating this structure to weight disentanglement, we showed empirically that DPO reduces cross-task interference by $11{\times}$ to $35{\times}$ relative to SFT and degrades less under model merging, and that both sources of sparsity hold from GPT-2 to 7--9B models.

\paragraph{Limitations.}
Our analysis relies on the NTK linearization, which is not guaranteed to hold throughout DPO training of large models. More fundamentally, parameter-level sparsity (\cref{asm:sparse_features}) is an empirically motivated assumption rather than a consequence derived from the architecture. It relies on token overlap between the preferred and dispreferred responses: for pairs whose responses are entirely disjoint (e.g., different topics or formats), the gradient cancellation does not occur, and this source of sparsity may weaken or vanish. A formal derivation of \cref{asm:sparse_features}, and a characterization of the degree of overlap it requires in practice, are left to future work.

%%%%%%%%%%%%%%%%%%%%%%%%%%%%%%%%%%%%%%%%%%%%%%%%%%%%%%%%%%%%

\bibliographystyle{plainnat} 
\bibliography{references}

%%%%%%%%%%%%%%%%%%%%%%%%%%%%%%%%%%%%%%%%%%%%%%%%%%%%%%%%%%%%

\appendix

\section{Proof and Numerical Evaluation of \texorpdfstring{\Cref{lem:data_sparsity}}{Lemma 2}}
\label{app:sv_bound}

\subsection{Proof}
\label{app:proof_sv_bound}

Throughout, we work with the frozen-NTK surrogate of the margin, $m_i^{\mathrm{lin}}(\theta) := m_i(\theta_0) + \langle \Phi_i, \theta - \theta_0 \rangle$, under which the DPO loss reduces exactly to the logistic regression \cref{eq:logistic_regression}; \cref{lem:data_sparsity} holds exactly for this model. Under $\pi_{\theta_0} = \pi_{\mathrm{ref}}$, the offsets vanish, $b_i = m_i(\theta_0) = 0$, so that $u^\star$ solves $\max_{\|u\| \le 1} \min_{i \in [n]} \langle u, \Phi_i \rangle$. By \cref{asm:separability}, the optimal value $\gamma_\star$ is positive, and hence the constraint is active: $\|u^\star\| = 1$.

\paragraph{KKT conditions.}
The problem is equivalent to the hard-margin problem $w^\star = \arg\min_w \frac{1}{2}\|w\|^2$ subject to $\langle w, \Phi_i \rangle \ge 1$ for all $i$, with $u^\star = w^\star / \|w^\star\|$ and $\gamma_\star = 1 / \|w^\star\|$. Its KKT conditions give $w^\star = \sum_i \hat{\alpha}_i \Phi_i$ with $\hat{\alpha}_i \ge 0$ and $\hat{\alpha}_i (\langle w^\star, \Phi_i \rangle - 1) = 0$. Thus $\alpha_i^\star = \hat{\alpha}_i / \|w^\star\|$ in \cref{eq:kkt}, and complementary slackness yields
\begin{equation}
  \langle u^\star, \Phi_i \rangle = \gamma_\star \qquad \text{for all } i \in \mathrm{SV}.
  \label{eq:kkt_active}
\end{equation}

\paragraph{Bound.}
Let $S := \mathrm{SV}$, $s := |S|$, and $\mathbf{1}_S \in \mathbb{R}^s$ the all-ones vector. Summing \cref{eq:kkt_active} over $S$ and applying the Cauchy--Schwarz inequality with $\|u^\star\| = 1$,
\begin{equation}
  s\, \gamma_\star = \Big\langle u^\star, \sum_{i \in S} \Phi_i \Big\rangle \le \Big\| \sum_{i \in S} \Phi_i \Big\| = \left( \mathbf{1}_S^\top K_{S,S}\, \mathbf{1}_S \right)^{1/2}.
\end{equation}
By the Rayleigh quotient, $\mathbf{1}_S^\top K_{S,S} \mathbf{1}_S \le \lambda_{\max}(K_{S,S})\, \|\mathbf{1}_S\|^2 = s\, \lambda_{\max}(K_{S,S})$. Combining the two, $s^2 \gamma_\star^2 \le s\, \lambda_{\max}(K_{S,S})$, i.e., $s\, \gamma_\star^2 \le \lambda_{\max}(K_{S,S})$. Since $K_{S,S}$ is a principal submatrix of the positive semidefinite matrix $K$, Cauchy's interlacing theorem gives $\lambda_{\max}(K_{S,S}) \le \lambda_{\max}(K)$. Finally, since $s$ is an integer with $s \le n$, we obtain $s \le \min\{ n, \lfloor \lambda_{\max}(K) / \gamma_\star^2 \rfloor \}$. \hfill$\qed$

\subsection{Numerical Evaluation}
\label{app:sv_numerical}

Computing $\gamma_\star$ exactly requires solving the max-margin problem. We instead evaluate the bound through the finite-time margin of the normalized trajectory, which is justified by the following corollary.

\begin{corollary}[Finite-time evaluation of the bound]
\label{cor:finite_time}
Under the assumptions of \cref{lem:data_sparsity}, let $\Delta\theta(T) \neq 0$ and define the finite-time margin
\begin{equation}
  \tilde{\gamma}_T := \min_{i \in [n]} \left\langle \frac{\Delta\theta(T)}{\|\Delta\theta(T)\|},\, \Phi_i \right\rangle.
\end{equation}
Then $\tilde{\gamma}_T \le \gamma_\star$. If, moreover, $\Delta\theta(T)$ follows the gradient flow of the frozen-NTK surrogate, then $\tilde{\gamma}_T \to \gamma_\star$ as $T \to \infty$. Consequently, whenever $\tilde{\gamma}_T > 0$,
\begin{equation}
  |\mathrm{SV}| \;\le\; \min\left\{ n,\ \left\lfloor \frac{\lambda_{\max}(K)}{\gamma_\star^2} \right\rfloor \right\} \;\le\; \min\left\{ n,\ \left\lfloor \frac{\lambda_{\max}(K)}{\tilde{\gamma}_T^2} \right\rfloor \right\}.
\end{equation}
\end{corollary}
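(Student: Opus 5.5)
The plan has three steps: prove $\tilde{\gamma}_T \le \gamma_\star$ directly from the definition of $\gamma_\star$; obtain $\tilde{\gamma}_T \to \gamma_\star$ from \cref{lem:implicit_bias} and continuity; and then combine with \cref{lem:data_sparsity} using monotonicity of $t \mapsto \lambda_{\max}(K)/t^2$.

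\textbf{Upper bound.} Put $v_T := \Delta\theta(T)/\|\Delta\theta(T)\|$, so $\|v_T\| = 1$ and $v_T$ is feasible for $\max_{\|u\|\le 1}\min_i \langle u,\Phi_i\rangle$ (recall $b_i = 0$ under $\pi_{\theta_0} = \pi_{\mathrm{ref}}$). Since $\gamma_\star$ is the optimal value of this problem, $\tilde{\gamma}_T = \min_i \langle v_T, \Phi_i\rangle \le \gamma_\star$. This needs no dynamics.

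\textbf{Convergence.} Along the frozen-NTK gradient flow, \cref{lem:implicit_bias} (Soudry et al.) gives $v_T \to u^\star$. The map $h(u) := \min_{i\in[n]}\langle u,\Phi_i\rangle$ is a minimum of finitely many linear functions. It is therefore Lipschitz with constant $\max_i\|\Phi_i\|$, since $|h(u)-h(u')| \le \max_i |\langle u-u',\Phi_i\rangle| \le \max_i\|\Phi_i\|\,\|u-u'\|$. Hence $\tilde{\gamma}_T = h(v_T) \to h(u^\star) = \gamma_\star$.

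Two small points need care here. First, $\Delta\theta(T) \neq 0$ holds for all $T > 0$: at $T = 0$ the flow has velocity $\sum_i \sigma(0)\Phi_i$, and this is nonzero because separability with a positive margin forces $\sum_i \Phi_i \neq 0$ (pair it with the separator). Second, the convergence of $v_T$ must be used in norm, not only up to subsequences. Soudry et al.\ give norm convergence, so this is immediate. This step is the only one resting on an external asymptotic result, so I consider it the main obstacle. Everything else is elementary, and the convergence claim is not needed for the final inequality.

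\textbf{Bound.} Suppose $\tilde{\gamma}_T > 0$. Then $0 < \tilde{\gamma}_T \le \gamma_\star$, so $\lambda_{\max}(K)/\gamma_\star^2 \le \lambda_{\max}(K)/\tilde{\gamma}_T^2$, using $\lambda_{\max}(K) \ge 0$ because $K$ is PSD. The floor function and $\min\{n,\cdot\}$ are both nondecreasing, so applying them preserves this inequality. Chaining with \cref{lem:data_sparsity}, which gives $|\mathrm{SV}| \le \min\{n, \lfloor\lambda_{\max}(K)/\gamma_\star^2\rfloor\}$, yields the displayed pair of inequalities.
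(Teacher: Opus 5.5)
Your proposal is correct and follows the paper's proof step for step. Feasibility of the normalized iterate gives $\tilde{\gamma}_T \le \gamma_\star$. \cref{lem:implicit_bias} together with continuity of $u \mapsto \min_i \langle u, \Phi_i \rangle$ gives $\tilde{\gamma}_T \to \gamma_\star$; your Lipschitz estimate is just a quantitative form of the paper's continuity remark. Monotonicity of $t \mapsto \lambda_{\max}(K)/t^2$, the floor, and $\min\{n,\cdot\}$ then chains this with \cref{lem:data_sparsity}.

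One small aside. Your side claim that $\Delta\theta(T) \neq 0$ for all $T>0$ is not needed, since it is a hypothesis of the corollary, and your initial-velocity argument only covers small $T$. The clean justification for all $T>0$ is $\langle \Delta\theta(T), w \rangle = \sum_i A_i(T)\, \langle \Phi_i, w \rangle \ge \gamma \sum_i A_i(T) > 0$, where $w$ is the separator of \cref{asm:separability}.
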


\begin{proof}
Let $u_T := \Delta\theta(T)/\|\Delta\theta(T)\|$. Since $\|u_T\| = 1$, $u_T$ is feasible for the max-margin problem, and hence $\tilde{\gamma}_T = \min_{i} \langle u_T, \Phi_i \rangle \le \max_{\|u\| \le 1} \min_{i} \langle u, \Phi_i \rangle = \gamma_\star$. Under the frozen-NTK gradient flow, \cref{lem:implicit_bias} gives $u_T \to u^\star$; since $u \mapsto \min_{i} \langle u, \Phi_i \rangle$ is continuous as the minimum of finitely many continuous functions, $\tilde{\gamma}_T \to \min_{i} \langle u^\star, \Phi_i \rangle = \gamma_\star$. Finally, if $\tilde{\gamma}_T > 0$, then $1/\gamma_\star^2 \le 1/\tilde{\gamma}_T^2$, which, combined with \cref{eq:sv_bound}, $|\mathrm{SV}| \le n$, and the integrality of $|\mathrm{SV}|$, yields the chain of inequalities.
\end{proof}

The values reported below are computed from the actual training trajectory of the network rather than from the frozen-NTK gradient flow. For such a trajectory, the convergence statement of \cref{cor:finite_time} does not apply, but the inequality $\tilde{\gamma}_T \le \gamma_\star$ still holds, since it uses only $\|u_T\| = 1$; the reported ratio $\lambda_{\max}(K)/\tilde{\gamma}_T^2$ is therefore a conservative (upper) estimate of $\lambda_{\max}(K)/\gamma_\star^2$. \Cref{tab:sv_bound} reports this quantity in the setting of \cref{fig:data_sparsity} (GPT-2 Small/Medium, HH-RLHF, $n = 100$). For both models it is about $1/7$ to $1/9$ of $n$, guaranteeing that $|\mathrm{SV}|$ is roughly an order of magnitude smaller than $n$, consistent with the concentration observed in \cref{sec:exp_data_sparsity}.

\begin{table}[ht]
  \centering
  \caption{Conservative evaluation of the support-set bound \cref{eq:sv_bound} in the setting of \cref{fig:data_sparsity} ($n = 100$).}
  \label{tab:sv_bound}
  \small
  \begin{tabular}{lcc}
    \toprule
    \textbf{Model} & $\lambda_{\max}(K) / \tilde{\gamma}_T^2$ & \textbf{Implied bound on $|\mathrm{SV}|$} \\
    \midrule
    GPT-2 Small  & $14.58$ & $\le 14$ \\
    GPT-2 Medium & $11.33$ & $\le 11$ \\
    \bottomrule
  \end{tabular}
\end{table}

\section{Experimental Details}
\label{app:exp_details}

\paragraph{Time-integrated weights.}
The weights $A_i(T) = \int_0^T \sigma(-m_i(\theta(t)))\, dt$ are approximated by accumulating the discrete-time analogue $\sum_{t} \sigma(-m_i(\theta_t))$ over training steps. Since $A_i$ depends only on the margins, it requires no gradient features.

\paragraph{Experimental settings.}
\Cref{tab:exp_settings} summarizes the models, datasets, and numbers of preference pairs used in each experiment. The full fine-tuning DPO runs for the 7--8B models use up to eight NVIDIA A100 GPUs.

\begin{table}[ht]
  \centering
  \caption{Experimental settings. ``Pairs'' is the number of preference pairs on which each quantity is computed. HH: Anthropic HH-RLHF; UF: UltraFeedback Binarized; ep.: epochs.}
  \label{tab:exp_settings}
  \small
  \begin{tabular}{lllll}
    \toprule
    \textbf{Experiment} & \textbf{Models} & \textbf{Dataset} & \textbf{Pairs} & \textbf{Training} \\
    \midrule
    Data-level (\cref{fig:data_sparsity}) & GPT-2 S/M & HH & 100 & DPO, 1000 ep. \\
    Data-level (\cref{tab:data_sparsity}) & GPT-2 S/M & HH & 1000 & DPO, 1000 ep. \\
     & Llama-3-8B & UF & 1000 & Full-FT DPO, 3 ep. \\
     & Mistral-7B & UF & 1000 & Full-FT DPO, 3 ep. \\
    Parameter-level (\cref{fig:param_sparsity}) & GPT-2 M & HH & 100 & -- \\
    Parameter-level (\cref{tab:param_sparsity}) & All five & UF & 1000 & -- \\
    Interference (\cref{tab:interference}) & GPT-2 S/M & HH, UF & 100 & DPO / SFT \\
    Model merging (\cref{tab:merging}) & GPT-2 M & HH, UF & \cref{app:merging} & DPO / SFT \\
    \bottomrule
  \end{tabular}
\end{table}

\section{Evolution of Log-Probabilities During DPO Training}
\label{app:logps}

We record the log-probabilities of the preferred and dispreferred responses along the training trajectory that produces \cref{fig:data_sparsity} (GPT-2 Medium, HH-RLHF, $N = 100$). Each value is the sequence-level log-probability averaged over the 100 pairs, with $\text{logps/chosen} = \log \pi_\theta(y^+ \mid x)$ and $\text{logps/rejected} = \log \pi_\theta(y^- \mid x)$.

\begin{table}[ht]
  \centering
  \caption{Average log-probabilities of preferred (chosen) and dispreferred (rejected) responses during DPO training (GPT-2 Medium, HH-RLHF, $N = 100$).}
  \label{tab:logps}
  \small
  \begin{tabular}{rccc}
    \toprule
    \textbf{Epoch} & \textbf{logps/chosen} & \textbf{logps/rejected} & \textbf{Gap (chosen $-$ rejected)} \\
    \midrule
    1    & $-247.2$ & $-258.9$ & $11.7$ \\
    100  & $-280.5$ & $-334.5$ & $54.0$ \\
    1000 & $-308.3$ & $-436.9$ & $128.6$ \\
    \bottomrule
  \end{tabular}
\end{table}

As shown in \cref{tab:logps}, both log-probabilities decrease, but the rejected log-probability decreases considerably faster, so the gap widens from $11.7$ to $128.6$. Since $\pi_{\theta_0} = \pi_{\mathrm{ref}}$, the margin satisfies $m_i(\theta) = \beta\,(\text{gap}_i(\theta) - \text{gap}_i(\theta_0))$, where $\text{gap}_i$ denotes the per-pair log-probability gap; a widening gap thus directly increases the margin. As a result, $\sigma(-m_i) \to 0$ for easy pairs, which stops their contribution to $A_i$ and produces the polarization of $A_i$ observed in \cref{fig:data_sparsity}.

\section{Model Merging Experiments}
\label{app:merging}

\paragraph{Setup.}
For each task, we fine-tune GPT-2 Medium with DPO or SFT until convergence and take the task vector $\tau = \theta^\star - \theta_0$ from the best checkpoint. For each target task, we form $\theta_{\mathrm{merged}} = \theta_0 + \alpha\ (\tau_{\mathrm{target}} + \tau_{\mathrm{other}})$, where $\alpha \in \{0, 0.1, \ldots, 2.0\}$ is selected on a validation set, and report $\Delta = (\text{single-task}) - (\text{merged})$. We use the same two task pairs as in \cref{sec:exp_interference}.

\paragraph{Results.}
\Cref{tab:merging_full} reports the single-task and merged performance. On both task pairs, DPO degrades less than SFT under merging. The gap between the two methods is smaller than the order-of-magnitude reduction in the cross-task inner product (\cref{tab:interference}), since merged performance also depends on factors other than parameter-space overlap.

\begin{table}[ht]
  \centering
  \caption{Single-task and merged performance of GPT-2 under task-vector merging. $\Delta$ is the single-task minus merged performance (lower is better).}
  \label{tab:merging_full}
  \small
  \setlength{\tabcolsep}{4pt}
  \begin{tabular}{llcccccc}
    \toprule
    & & \multicolumn{2}{c}{\textbf{Single}} & \multicolumn{2}{c}{\textbf{Merged}} & \multicolumn{2}{c}{\textbf{$\Delta$}} \\
    \cmidrule(lr){3-4} \cmidrule(lr){5-6} \cmidrule(lr){7-8}
    \textbf{Task Pair} & \textbf{Method} & Task~1 & Task~2 & Task~1 & Task~2 & Task~1 & Task~2 \\
    \midrule
    \multirow{2}{*}{HH help$\leftrightarrow$harm} & SFT & $0.631$ & $0.625$ & $0.521$ & $0.523$ & $0.110$ & $0.102$ \\
                                                  & DPO & $0.668$ & $0.645$ & $0.641$ & $0.619$ & $\mathbf{0.027}$ & $\mathbf{0.026}$ \\
    \midrule
    \multirow{2}{*}{HH help$\leftrightarrow$UF}   & SFT & $0.631$ & $0.535$ & $0.518$ & $0.505$ & $0.096$ & $0.030$ \\
                                                  & DPO & $0.668$ & $0.572$ & $0.632$ & $0.558$ & $\mathbf{0.036}$ & $\mathbf{0.014}$ \\
    \bottomrule
  \end{tabular}
\end{table}

\section{Licenses}
\label{app:licenses}

\Cref{tab:licenses} lists the existing assets used in this paper together with their licenses and the versions (Hugging Face identifiers) we use.

\begin{table}[ht]
  \centering
  \caption{Existing assets, versions, and licenses.}
  \label{tab:licenses}
  \small
  \setlength{\tabcolsep}{4pt}
  \begin{tabular}{lll}
    \toprule
    \textbf{Asset} & \textbf{Version (Hugging Face)} & \textbf{License} \\
    \midrule
    GPT-2 Small              & \texttt{openai-community/gpt2}                 & Modified MIT \\
    GPT-2 Medium             & \texttt{openai-community/gpt2-medium}          & Modified MIT \\
    Llama-3-8B-Instruct      & \texttt{meta-llama/Meta-Llama-3-8B-Instruct}   & Llama 3 Community License \\
    Mistral-7B-Instruct-v0.2 & \texttt{mistralai/Mistral-7B-Instruct-v0.2}    & Apache 2.0 \\
    Gemma-2-9B-it            & \texttt{google/gemma-2-9b-it}                  & Gemma Terms of Use \\
    Anthropic HH-RLHF        & \texttt{Anthropic/hh-rlhf}                     & MIT \\
    UltraFeedback Binarized  & \texttt{HuggingFaceH4/ultrafeedback\_binarized} & MIT \\
    \bottomrule
  \end{tabular}
\end{table}

\end{document}